\newcommand{\name}{AlignFlow}
\title{AlignFlow: Cycle Consistent Learning from\\ Multiple Domains via Normalizing Flows}
\newtheorem{theorem}{Theorem}
\newtheorem{definition}{Definition}
\newtheorem{lemma}{Lemma}
\newtheorem{proposition}{Proposition}
\newcommand{\namecite}[1]{\citeauthor{#1}~\shortcite{#1}}
\newcommand\citep{\cite}
\newcommand\citet{\namecite}
\author{Aditya Grover\thanks{Equal Contribution}, Christopher Chute$^\ast$, Rui Shu, Zhangjie Cao, Stefano Ermon\\
Department of Computer Science\\
Stanford University\\
\{adityag,chute,ruishu,caozj,ermon\}@cs.stanford.edu 
}
\def\eqref#1{equation~\ref{#1}}
\def\1{\bm{1}}
\def\calX{{\mathcal{X}}}
\def\calY{{\mathcal{Y}}}
\def\calZ{{\mathcal{Z}}}
\def\calC{{\mathcal{C}}}
\def\calD{{\mathcal{D}}}
\def\calG{{\mathcal{G}}}
\def\calS{{\mathcal{S}}}
\def\gab{G_{\textnormal{A} \rightarrow  \textnormal{B}}}
\def\gba{G_{\textnormal{B} \rightarrow  \textnormal{A}}}
\def\gza{G_{\textnormal{Z} \rightarrow  \textnormal{A}}}
\def\gaz{G_{\textnormal{A} \rightarrow  \textnormal{Z}}}
\def\gzb{G_{\textnormal{Z} \rightarrow  \textnormal{B}}}
\def\gbz{G_{\textnormal{B} \rightarrow  \textnormal{Z}}}
\def\gyx{G_{\textnormal{Y} \rightarrow  \textnormal{X}}}
\def\ildj#1#2#3{\left \vert \mathrm{det} \frac{\partial#1^{-1}}{\partial#2}  \right\vert_{#2=#3}}
\def\ra{{\textnormal{A}}}
\def\rb{{\textnormal{B}}}
\def\rx{{\textnormal{X}}}
\def\ry{{\textnormal{Y}}}
\def\rz{{\textnormal{Z}}}
\DeclareMathAlphabet{\mathsfit}{\encodingdefault}{\sfdefault}{m}{sl}
\SetMathAlphabet{\mathsfit}{bold}{\encodingdefault}{\sfdefault}{bx}{n}
\newcommand{\E}{\mathbb{E}}
\begin{document}
\maketitle

\begin{abstract}
Given datasets from multiple domains, a key challenge is to efficiently exploit these data sources for modeling a target domain.
Variants of this problem have been studied in many contexts, such as cross-domain translation and domain adaptation.
We propose \name{}, a generative modeling framework that models each domain via a normalizing flow.
The use of normalizing flows allows for a) flexibility in specifying learning objectives via adversarial training, maximum likelihood estimation, or a hybrid of the two methods; and b) learning and exact inference of a shared representation in the latent space of the generative model. 
We derive a uniform set of conditions under which \name{} is marginally-consistent for the different learning objectives.
Furthermore, we show that \name{} guarantees exact cycle consistency in mapping datapoints from a source domain to target and back to the source domain.
Empirically, \name{} outperforms relevant baselines on image-to-image translation and unsupervised domain adaptation and can be used to simultaneously interpolate across the various domains using the learned representation. 
\end{abstract}

\section{Introduction}

In recent years, there has been an increase in the availability of both labeled and unlabeled datasets from multiple sources.
For example, many variants of face datasets scraped from sources such as Wikipedia and IMDB are publicly available.
Given data from two or more domains, we expect sample-efficient learning algorithms to be able to learn and \textit{align} the shared structure across these domains for accurate downstream tasks.
This perspective has a broad range of applications across machine learning, including
relational learning~\citep{kim2017learning}, 
domain adaptation~\citep{taigman2016unsupervised,cycada,bousmalis2017unsupervised},
image and video translation for computer vision~\citep{isola2017image,wang2018video}, and machine translation for low resource languages~\citep{gu2018universal}.

Many variants of the domain alignment problem have been studied in prior work.
For instance, \textit{unpaired cross-domain translation} refers to the task of learning a mapping from one domain to another given datasets from the two domains~\citep{zhu2017toward}. 
This task can be used as a subproblem in the \textit{domain adaptation} setting, where the goal is to learn a classifier for the unlabeled domain given labeled data from a related source domain~\citep{saenko2010adapting}.
Many of these problems are underconstrained due to the limitations on the labelled supervision available for target domain.
An amalgam of inductive biases need to be explicitly enforced to learn meaningful solutions, e.g., cycle-consistency~\citep{zhu2017toward}, entropic regularization~\citep{courty2017joint} etc.
These inductive biases can be specified via additional loss terms or by specifying constraints on the model family.

We present \name{}, a latent variable generative framework that seeks to discover the shared structure across multiple data domains using normalizing flows~\citep{normalizing-flow,nice,real-nvp}. 
Latent variable generative models are highly effective for inferring hidden structure within observed data from a single domain.
In \name{}, we model the data from each domain via an invertible generative model with \emph{a single latent space shared across all the domains}. 
If we let the two domains to be $\ra$ and $\rb$ with a shared latent space, say $\rz$, then the latent variable generative model for $\ra$ may additionally share some or all parameters with the model of domain $\rb$.
Akin to a single invertible model, the collection of invertible models in \name{} provide great flexibility in specifying learning objectives and can be trained via 
maximum likelihood estimation, adversarial training or a hybrid variant accounting for both objectives.

By virtue of an invertible design, \name{} naturally extends as a cross-domain translation model.
To translate data across two domains, say $\ra$ to $\rb$, we can invert a data point from $\ra \to \rz$ first followed by a second inversion from $\rz \to \rb$.
Appealingly, we show that this composition of invertible mappings is \textit{exactly cycle-consistent}, i.e., translating a datapoint from  $\ra$ to $\rb$ using the forward mapping and backwards using the reverse mapping gives back the original datapoint and vice versa from $\rb$ to $\ra$.
Cycle-consistency was first introduced in CycleGAN~\citep{cycle-gan} and has been shown to be an excellent inductive bias for underconstrained problems, such as unpaired domain alignment. 
While models such as CycleGAN only provide approximate cycle-consistency by incorporating additional loss terms, \name{} can omit these terms and guarantee exact cycle-consistency by design.

We analyze the \name{} framework extensively. 
Theoretically, we derive conditions under which the \name{} objective is consistent in the sense of recovering the true marginal distributions.
For objectives that use adversarial loss terms, we derive optimal critics in this setting.
Empirically, we consider two sets of tasks: image-to-image translation and unsupervised domain adaptation.
On both these tasks, we observe consistent improvements over other approximately cycle-consistent generative frameworks on three benchmark pairs of high-dimensional image datasets.
\section{Preliminaries}

In this section, we discuss the necessary background and notation on generative adversarial networks and normalizing flows.
We overload uppercase notation $\rx, \ry, \rz$ to denote random variables and their sample spaces and use lowercase notation $x, y, z$ to denote values assumed by these variables.

\subsection{Generative Adversarial Networks}
Generative adversarial networks (GAN) are a class of latent variable generative models that specify the generator as a deterministic mapping $h:\rz \rightarrow \rx$ between a set of latent variables $\rz$ and a set of observed variables $\rx$~\citep{gan}. 
In order to sample from a GAN, we need a prior density over $\rz$ that permits efficient sampling. The generator of a GAN can also be conditional, where the conditioning is on another set of observed variables and optionally the latent variables $\rz$ as before~\citep{mirza2014conditional}.

A GAN is trained via adversarial training, wherein the generator $h$ plays a minimax game with an auxiliary critic $C$. The goal of the critic $C: \rx \rightarrow \mathbb{R}$ is to distinguish real samples from the observed dataset with samples generated via $h$.
The generator, on the other hand, tries to generate samples that can maximally \textit{confuse} the critic. 
Many learning objectives have been proposed for adversarial training, such as those based on f-divergences~\citep{nowozin2016f} and Wasserstein Distance~\citep{arjovsky2017wasserstein}. 
For the standard cross-entropy GAN, the critic outputs a probability of a datapoint being real and optimizes the following objective w.r.t. a data distribution $p^\ast_\rx:\rx \rightarrow \mathbb{R}_{\geq 0}$:
\begin{align}\label{eq:adv}
  \mathcal{L}_{\textnormal{GAN}}(C, h) &= \E_{x \sim p^\ast_\rx}[\log C(x)] \nonumber \\
  &+ \E_{z \sim p_\rz}[\log(1- C(h(z)))].
\end{align}
for a suitable choice of prior density $p_\rz$.
The generator and the critic are both parameterized by deep neural networks and learned via alternating gradient updates. 
Because adversarial training only requires samples from the generative model, it can be used to train generative models with intractable or ill-defined likelihoods~\citep{mohamed2016learning}.
Hence, adversarial training is \textit{likelihood-free} and in practice, it gives excellent performance for tasks that require data generation.
However, these models are hard to train due to the alternating minimax optimization and suffer from issues such as mode collapse~\citep{goodfellow2016nips}.

\subsection{Normalizing Flows}
Normalizing flows are a class of latent variable generative models that specify the generator as an \emph{invertible} mapping $h:\rz \rightarrow \rx$ between a set of latent variables $\rz$ and a set of observed variables $\rx$. 
Let $p_\rx$ and $p_\rz$ denote the marginal densities defined by the model over $\rx$ and $\rz$ respectively.
Using the change-of-variables formula, these marginal densities can be related as:
\begin{align}\label{eq:cov}
p_\rx(x) = p_\rz(z) \ildj{h}{\rx}{x}
\end{align}
where $z=h^{-1}(x)$ due to the invertibility constraints. 
Here, the second term on the RHS corresponds to the absolute value of the determinant of the Jacobian of the inverse transformation and signifies the change in volume when translating across the two sample spaces.

For evaluating likelihoods via the change-of-variables formula, we require efficient and tractable evaluation of the prior density, the inverse transformation $h^{-1}$, and the determinant of the Jacobian of $h^{-1}$. 
To draw a sample from this model, we perform ancestral sampling, i.e., we first sample a latent vector $z \sim p_\rz(z)$ and obtain the sampled vector as given by $x = h(z)$. 
This requires the ability to efficiently: (1) sample from the prior density and (2) evaluate the forward transformation $h$.
Many transformations parameterized by deep neural networks that satisfy one or more of these criteria have been proposed in the recent literature on normalizing flows, e.g., NICE~\citep{nice} and Autoregressive Flows~\citep{iaf,maf}. 
By suitable design of transformations, both likelihood evaluation and sampling can be performed efficiently, as in Real-NVP~\citep{real-nvp}.
Consequently, a flow model can be trained efficiently to maximize the likelihood of the observed dataset (a.k.a. maximum likelihood estimation or MLE) as well as likelihood-free adversarial training~\citep{flow-gan}. 

\section{The \name{} Framework}
 
In this section, we present the \name{} framework for learning generative models in the presence of unpaired data from multiple domains.
For ease of presentation, we consider the case of two domains.
Unless mentioned otherwise, our results naturally extend to more than two domains as well.

\subsection{Problem Setup}
The learning setting we consider is as follows.
We are given unpaired datasets $\calD_\ra$ and $\calD_\rb$ from two domains $\ra$ and $\rb$ respectively. We assume that the datapoints are sampled i.i.d. from some true but unknown marginal densities denoted as $p^\ast_\ra$ and $p^\ast_\rb$ respectively. 
We are interested in learning models for the following distributions:  (a) the marginal likelihoods $p_\ra$ and $p_\rb$ that approximate $p^\ast_\ra$ and $p^\ast_\rb$ and (b) conditional distributions $p_{\ra \vert \rb}$ and $p_{\rb \vert \ra}$.
The unconditional models can be used for density estimation and sampling from $\ra$ and $\rb$ whereas the conditional models can be used for translating (i.e., conditional sampling) from $\rb \to \ra$ and $\ra \to \rb$.

Before presenting the \name{} framework, we note two observations.
For task (a), we need datasets from the domains $\ra$ and $\rb$ respectively for learning.
For task (b), we note that the problem is \textit{underconstrained} since we are only given data from the marginal distributions and hence, it is unclear how to learn the conditional distribution that relates the datapoints from the two domains.
Hence, we need additional inductive biases on our learning algorithm that can learn useful conditional distributions,
In practice, many such forms of inductive biases have been designed and shown to be useful across relavant tasks such as cross-domain translation and domain adaptation~\citep{zhu2017toward,liu2017unsupervised}.

\subsection{Representation}
We will use a graphical model to represent the relationships between the domains.
Consider a Bayesian network $\ra \leftarrow \rz \rightarrow \rb$ with two sets of observed random variables (domains) $\ra\subseteq \mathbb{R}^n$ and $\rb\subseteq \mathbb{R}^n$ and a parent set of latent random variables $\rz \subseteq \calZ$.

The latent variables $\rz$ indicate a shared feature space between the observed variables $\ra$ and $\rb$, which will be exploited later for efficient learning and inference.
While $\rz$ is unobserved, we assume a prior density $p_\rz$ over these variables, such as an isotropic Gaussian. 
Finally, to compactly specify the joint distribution over all sets of variables, we constrain the relationship between $\ra$ and $\rz$, and $\rb$ and $\rz$ to be invertible. That is, we specify mappings $\gza$ and $\gzb$ such that the respective inverses $\gaz=\gza^{-1}$ and $\gbz=\gzb^{-1}$ exist. 
Notice that such a representation naturally provides a mechanism to  translate from one domain to another as the composition of two invertible mappings:
\begin{align}
    \gab &= \gzb \circ \gaz\label{eq:invert_1}\\
    \gba &= \gza \circ \gbz.\label{eq:invert_2}
\end{align}
Since composition of invertible mappings is invertible, both $\gab$ and $\gba$ are invertible. In fact, it is straightforward to observe that $\gab$ and $\gba$ are inverses of each other:
\begin{align}
    \gab^{-1} &= (\gzb \circ \gaz)^{-1} = \gaz^{-1} \circ \gzb^{-1} \nonumber \\
    &= \gza \circ \gbz = \gba.\label{eq:invert_3}
\end{align}

\subsection{Learning Algorithms \& Objectives}

As discussed in the preliminaries, each of the individual flows $\gza$ and $\gzb$ express a model with density $p_\ra$ and $p_\rb$ respectively and can be trained independently via maximum likelihood estimation, adversarial learning, or a hybrid objective.
However, our goal is to perform sample-efficient learning by exploiting data from other domains as well as learn a conditional mapping across the two domains.
For both these goals, we require learning algorithms which use data from both domains for parameter estimation.
 Unless mentioned otherwise, all our results that hold for a particular domain $\ra$ will have a natural counterpart for the domain $\rb$.
 
\subsubsection{Adversarial Training}
Instead of adversarial training of $\gza$ and $\gzb$ independently, we can directly perform adversarial training of the mapping $\gba$.
 That is, we first generate data from $\gba$ using the prior density given as $p^\ast_\rb$.
We also introduce a critic $C_\ra$ which distinguishes real samples $a \sim p^\ast_\ra$ with the generated samples $\gba(b)$ for $b \sim p^\ast_\rb$.
For example, the cross-entropy GAN loss in this case is given as:
\begin{align}\label{eq:adv_2}
  \mathcal{L}_{\textnormal{GAN}}(C_\ra, \gba)&= \E_{a \sim p^\ast_\ra}[\log C_\ra(a)] \nonumber\\
  &+ \E_{b \sim p^\ast_\rb}[\log(1- C_\ra(\gba(b)))].
\end{align}
The expectations above are approximated empirically via datasets $\calD_\ra$ and $\calD_\rb$ respectively.

\subsubsection{Maximum Likelihood Estimation}
Unlike adversarial training, flow models trained with maximum likelihood estimation (MLE) explicitly require a prior $p_\rz$ with a tractable density (e.g., isotropic Gaussian) to evaluate model likelihoods using the change-of-variables formula in Eq.~\ref{eq:cov}.
Due to this tractability requirement, we cannot substitute $p_\rz$ with the unknown $p^\ast_\rb$ for MLE.
Instead, we can share parameters between the two mappings.
The extent of parameter sharing depends on the similarity across the two domains; for highly similar domains, entire architectures could potentially be shared in which case $\gza=\gzb$.

\subsubsection{Hybrid Training}
Both MLE and adversarial training objectives can be combined into a single training objective.
The most general \name{} objective is given as:
\begin{align}\label{eq:\name{}}
    &\mathcal{L}_{\textnormal{\name{}}}(\gba, C_\ra, C_\rb; \lambda_\ra, \lambda_\rb) \nonumber\\
    &
    =\mathcal{L}_{\textnormal{GAN}}(C_\ra, \gba) + \mathcal{L}_{\textnormal{GAN}}(C_\rb, \gab) 
    \nonumber \\
    &
    - \lambda_\ra \mathcal{L}_{\textnormal{MLE}}(\gza) - \lambda_\rb \mathcal{L}_{\textnormal{MLE}}(\gzb) 
\end{align}

where $\lambda_\ra \geq 0$ and $\lambda_\rb\geq 0$ are hyperparameters that control the strength of the MLE terms for domains $\ra$ and $\rb$ respectively. The \name{}  objective is minimized w.r.t. the parameters of the generator $\gab$ and maximized w.r.t. parameters of the critics $C_\ra$ and $C_\rb$. Notice that $\mathcal{L}_{\textnormal{\name{}}}$ is expressed as a function of the critics $C_\ra, C_\rb$ and only $\gba$ since the latter also encompasses the other parametric functions appearing in the objective ($\gab,\gza,\gzb$) via the invertibility constraints in Eqs.~\ref{eq:invert_1}-\ref{eq:invert_3}. When $\lambda_\ra=\lambda_\rb=0$, we perform pure \textit{adverarial training} and the prior over $\rz$ plays no role in learning.
On the other hand, when $\lambda_\ra=\lambda_\rb\to\infty$, 
    we can perform pure \textit{MLE training} to learn the invertible generator. Here, the critics $C_\ra, C_\rb$ play no role since the adversarial training terms are ignored. 

\subsection{Inference}
\name{} can be used for both conditional and unconditional sampling at test time. For conditional sampling as in the case of domain translation, we are given a datapoint $b \in \rb$ and we can draw the corresponding cross-domain translation in domain $\ra$ via the mapping $\gba$. 
For unconditional sampling, we require $\lambda_\ra\neq 0$  since doing so will activate the use of the prior $p_\rz$ via the MLE terms in the learning objective.
Thereafter, we can obtain samples by first drawing $z\sim p_\rz$ and then applying the mapping $\gza$ to $z$.
Furthermore, the same $z$ can be mapped to domain $\rb$ via $\gzb$.
Hence, we can sample paired data $(\gza(z), \gzb(z))$ given $z \sim p_\rz$.

\section{Theoretical Analysis}
The \name{} objective consists of three parametric models: one generator $\gba\in\calG$, and two critics $C_\ra\in \calC_\ra, C_\rb \in \calC_\rb$. 
Here, $\calG, \calC_\ra, \calC_\rb$ denote model families specified e.g., via deep neural network based architectures.
In this section, we analyze the optimal solutions to these parameterized models within well-specified model families.

\subsection{Optimal Generators}

Our first result characterizes the conditions under which the optimal generators exhibit \textit{marginal-consistency} for the data distributions defined over the domains $\ra$ and $\rb$.

\begin{definition}\label{def:mconst} (Marginal-consistency)
Let $p_{\rx, \ry}$ denote the joint distribution between two domains $\calX$ and $\calY$.  
An invertible mapping $\gyx: \calY \to \calX$ is marginally-consistent w.r.t. two arbitrary distributions ($p_{\rx},p_{\ry}$) iff for all $x\in \calX$, $y \in \calY$:
\begin{align}\label{eq:marginally_consistent}
    p_{\rx} (x)=  \bigg\{\begin{array}{lr} p_\ry(y) \ildj{{\gyx}}{\rx}{x}, & \text{if } x=\gyx(y)\\
        0, & \text{otherwise}.
        \end{array}
\end{align}
\end{definition}

Next, we show that \name{} is marginally-consistent for well-specified model families.

\begin{lemma}\label{thm:mle_implies_adv_consistency}
Let $\calG_{\ra}$ and $\calG_{\rb}$ denote the class of invertible mappings represented by the \name{} architecture for mapping $\rz \to \ra$ and $\rz \to \rb$. 
For a given choice of prior distribution $p_\rz$, if there exist mappings $\gza^\ast \in \calG_\ra, \gzb^\ast \in \calG_\rb$ that are marginally-consistent w.r.t. ($p^\ast_\ra,p_\rz$) and ($p^\ast_\rb,p_\rz$) respectively, then the mapping $\gba^\ast= \gza^\ast \circ \gzb^{\ast^{-1}}$ is marginally-consistent w.r.t. ($p^\ast_\ra,p^\ast_\rb$). 
\end{lemma}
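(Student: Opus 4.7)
The plan is to chain the two change-of-variables identities that marginal-consistency of $\gza^\ast$ and $\gzb^\ast$ supplies at a common latent point. First, fix an arbitrary $a \in \ra$, set $z := \gaz^\ast(a)$ (where $\gaz^\ast = (\gza^\ast)^{-1}$), and let $b := \gzb^\ast(z)$; by Eqs.~\ref{eq:invert_1}--\ref{eq:invert_3} this $b$ is the unique point with $a = \gba^\ast(b)$, so only the $x = \gyx(y)$ branch of Definition~\ref{def:mconst} has to be checked.

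Next, I would invoke marginal-consistency of $\gza^\ast$ w.r.t.\ $(p^\ast_\ra, p_\rz)$ at the pair $(a,z)$, and of $\gzb^\ast$ w.r.t.\ $(p^\ast_\rb, p_\rz)$ at the pair $(b,z)$. These yield
\begin{equation*}
    p^\ast_\ra(a) \;=\; p_\rz(z)\,J_\ra, \qquad p^\ast_\rb(b) \;=\; p_\rz(z)\,J_\rb,
\end{equation*}
where $J_\ra$ and $J_\rb$ abbreviate the absolute Jacobian determinants of $(\gza^\ast)^{-1}$ at $a$ and of $(\gzb^\ast)^{-1}$ at $b$, respectively. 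Solving the second identity for $p_\rz(z)$ and substituting into the first eliminates the latent density and produces $p^\ast_\ra(a) = p^\ast_\rb(b)\cdot(J_\ra/J_\rb)$.

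The last step is to identify $J_\ra/J_\rb$ with the absolute Jacobian determinant of $(\gba^\ast)^{-1}$ at $a$. By Eq.~\ref{eq:invert_3}, $(\gba^\ast)^{-1} = \gab^\ast = \gzb^\ast \circ \gaz^\ast$, so the chain rule together with multiplicativity of the determinant writes this Jacobian as $|\det(\partial \gzb^\ast/\partial \rz)|_{\rz=z}\cdot J_\ra$. The inverse function theorem applied to $\gzb^\ast$ at $z$ equates $|\det(\partial \gzb^\ast/\partial \rz)|_{\rz=z}$ with $1/J_\rb$, so this product is exactly $J_\ra/J_\rb$, and the displayed equation becomes the marginal-consistency identity for $\gba^\ast$ w.r.t.\ $(p^\ast_\ra, p^\ast_\rb)$.

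I do not anticipate a substantive obstacle: the entire argument consists of two applications of the change-of-variables formula, glued together by the chain rule and the inverse function theorem. The only care required is careful bookkeeping of which Jacobian is the ``forward'' one and which is the ``inverse'' one, so that the ratio produced when $p_\rz(z)$ is eliminated aligns precisely with the Jacobian of $(\gba^\ast)^{-1}$ that Definition~\ref{def:mconst} requires.
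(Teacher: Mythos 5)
Your proposal is correct and is essentially the paper's own argument, which is stated in one line ("follows directly from Definition~\ref{def:mconst} and change-of-variables applied to $\gba^\ast$") — you have simply spelled out the elimination of $p_\rz(z)$ and the identification of the Jacobian ratio via the chain rule and inverse function theorem. The bookkeeping checks out: the Jacobian of $(\gba^\ast)^{-1}=\gzb^\ast\circ\gaz^\ast$ at $a$ is indeed $J_\ra/J_\rb$, matching what Definition~\ref{def:mconst} requires.
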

The result follows directly from Definition~\ref{def:mconst} and change-of-variables applied to the mapping $\gba^\ast$.

\begin{theorem}\label{thm:full_consistency}
Assume that the model families for the critics $C_\ra: \ra \to [0,1]$ and $C_\rb: \rb \to [0,1]$ are the set of all measurable functions for the cross-entropy GAN objective. 
Then,  $\gba^\ast$ (as defined in Lemma~\ref{thm:mle_implies_adv_consistency}) globally minimizes the \name{} objective in Eq.~\ref{eq:\name{}} for any $\lambda_{\ra} \geq 0, \lambda_{\rb} \geq 0$.
\end{theorem}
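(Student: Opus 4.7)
The plan is to handle the objective term by term: first compute the inner maximum over the two critics, then show that $\gba^\ast$ attains the pointwise minimum of every resulting summand simultaneously. Since $\mathcal{L}_{\textnormal{\name{}}}$ is a weighted sum of two GAN terms and two MLE terms, any generator that jointly optimizes each of them is automatically a global minimizer of the overall min-max objective, regardless of the particular nonnegative weights.

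For the GAN terms, I would fix the generator $\gba$ (which also determines $\gab = \gba^{-1}$) and carry out the standard Goodfellow-style pointwise maximization of the cross-entropy critic objective in Eq.~\ref{eq:adv_2}. Since the critic classes are all measurable functions into $[0,1]$, the optimal critic is $C_\ra^\ast(a) = p^\ast_\ra(a) / (p^\ast_\ra(a) + p_\ra^{\gba}(a))$, where $p_\ra^{\gba}$ denotes the density on $\ra$ induced by pushing $p^\ast_\rb$ through $\gba$ via the change-of-variables formula (Eq.~\ref{eq:cov}). Substituting back yields $\mathcal{L}_{\textnormal{GAN}}(C_\ra^\ast, \gba) = 2 \, \mathrm{JSD}(p^\ast_\ra \,\|\, p_\ra^{\gba}) - \log 4$, whose minimum $-\log 4$ is attained iff $p_\ra^{\gba} = p^\ast_\ra$, i.e.\ iff $\gba$ is marginally-consistent w.r.t.\ $(p^\ast_\ra, p^\ast_\rb)$ in the sense of Definition~\ref{def:mconst}. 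Lemma~\ref{thm:mle_implies_adv_consistency} guarantees that $\gba^\ast$ has exactly this property, so it attains the GAN minimum. The symmetric argument, with the optimal critic $C_\rb^\ast$, shows that $\gba^\ast$ also minimizes $\mathcal{L}_{\textnormal{GAN}}(C_\rb, \gab)$.

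For the MLE terms, I would invoke the standard identity $\mathcal{L}_{\textnormal{MLE}}(\gza) = \E_{a \sim p^\ast_\ra}[\log p_\ra(a)] = -H(p^\ast_\ra) - \KL(p^\ast_\ra \,\|\, p_\ra)$, so that $-\mathcal{L}_{\textnormal{MLE}}(\gza)$ is minimized precisely when the model density $p_\ra$ induced by $\gza$ (again via Eq.~\ref{eq:cov}) equals $p^\ast_\ra$, i.e.\ when $\gza$ is marginally-consistent w.r.t.\ $(p^\ast_\ra, p_\rz)$. The hypothesis of Lemma~\ref{thm:mle_implies_adv_consistency} supplies exactly such a $\gza^\ast$, and symmetrically $\gzb^\ast$ for the $\rb$-side MLE term; because the \name{} parameterization in Eqs.~\ref{eq:invert_1}--\ref{eq:invert_3} ties $\gba$, $\gza$, and $\gzb$ together invertibly, setting $\gba^\ast = \gza^\ast \circ (\gzb^\ast)^{-1}$ simultaneously selects the marginally-consistent flows $\gza^\ast$ and $\gzb^\ast$ for the MLE terms.

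The only real obstacle is verifying that a \emph{single} $\gba^\ast$ suffices to minimize all four terms at once for any $\lambda_\ra, \lambda_\rb \geq 0$: this is precisely the content of Lemma~\ref{thm:mle_implies_adv_consistency}, which bridges marginal consistency of the two domain-to-latent flows with the induced marginal consistency of the cross-domain composition. Once each of the four summands has been shown to be pointwise lower-bounded and $\gba^\ast$ has been shown to attain each bound individually, the weighted sum is lower-bounded by the (weighted) sum of the bounds and $\gba^\ast$ attains that sum too, which establishes the claim for every nonnegative choice of $\lambda_\ra, \lambda_\rb$ (including the pure-MLE limit $\lambda \to \infty$ and the pure-adversarial limit $\lambda = 0$ as special cases).
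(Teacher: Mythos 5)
Your proposal is correct and follows essentially the same route as the paper's proof: decompose the objective into its four summands, use the standard Goodfellow optimal-critic/JSD argument to show each GAN term is minimized exactly at marginal consistency (which Lemma~\ref{thm:mle_implies_adv_consistency} supplies for $\gba^\ast$), note that each MLE term is minimized at the marginally-consistent $\gza^\ast$, $\gzb^\ast$ via the KL identity, and conclude that a simultaneous minimizer of all terms minimizes any nonnegatively weighted sum. The only difference is presentational — you re-derive the JSD form where the paper simply cites Theorem~1 of the GAN paper.
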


\textit{Proof.} See Appendix A.1. Theorem~\ref{thm:full_consistency} suggests that optimizing the \name{} objective will recover the marginal data distributions $p^\ast_\ra$ and $p^\ast_\rb$ under suitable conditions.
For the other goal of learning cross-domain mappings, we note that marginally-consistent mappings w.r.t. a target data distribution (such as $p^\ast_\ra$) and a target prior density (such as $p^\ast_\rb$) need not be unique. 
While a cycle-consistent, invertible model family mitigates the underconstrained nature of the cross-domain translation problem, it does not provably eliminate it.
We provide some non-identifiable constructions in Appendix A.3 and leave open the exploration of additional constraints that guarantee identifiability for future exploration.

\subsection{Optimal Critics}

Unlike standard adversarial training of an unconditional normalizing flow model~\citep{flow-gan,danihelka2017comparison}, the \name{} model involves two critics.
Here, we are interested in characterizing the dependence of the optimal critics for a given invertible mapping $\gab$.
Consider the \name{} framework where the GAN loss terms in Eq.~\ref{eq:\name{}} are specified via the cross-entropy objective in Eq.~\ref{eq:adv}. 
For this model, we can relate the optimal critics using the following result.

\begin{theorem}\label{thm:opt_critics}
Let $p^\ast_\ra$ and $p^\ast_\rb$ denote the true data densities for domains $\ra$ and $\rb$ respectively. Let $C^\ast_\ra$ and $C^\ast_\rb$ denote the optimal critics for the \name{} objective with the cross-entropy GAN loss for any fixed choice of the invertible mapping $\gab$. Letting $b=\gab(a)$ for any $a \in \ra$, we have:
\begin{align}
    C^\ast_\ra(a) &= \frac{C^\ast_\rb(b)p^\ast_\ra(a)}{p^\ast_\ra(a) + p^\ast_\rb(b) (1-C^\ast_\rb(b)) \ildj{\gba}{\ra}{a}}.
\end{align}
\end{theorem}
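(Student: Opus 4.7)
The plan is to derive closed-form expressions for the two optimal critics independently using the standard Goodfellow-style pointwise optimization argument, and then rearrange one formula into the form given in the statement by using the other formula (plus the reciprocity of Jacobian determinants of inverse maps) to substitute for $p^\ast_\rb(b)$.

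First, I would fix the invertible generator and optimize $\mathcal{L}_{\textnormal{GAN}}(C_\ra,\gba)$ pointwise. Using the change-of-variables formula from Eq.~\ref{eq:cov}, the expectation $\E_{b\sim p^\ast_\rb}[\log(1-C_\ra(\gba(b)))]$ can be rewritten as an integral over $a$, namely $\int p^\ast_\rb(\gab(a))\,\ildj{\gba}{\ra}{a}\log(1-C_\ra(a))\,da$. The combined integrand at a fixed $a$ has the form $\alpha\log C + \beta\log(1-C)$, which is maximized in $C\in(0,1)$ at $C=\alpha/(\alpha+\beta)$. This gives
$$C^\ast_\ra(a) \;=\; \frac{p^\ast_\ra(a)}{p^\ast_\ra(a)+p^\ast_\rb(b)\,\ildj{\gba}{\ra}{a}}, \qquad b=\gab(a),$$
and, applying the symmetric argument to $\mathcal{L}_{\textnormal{GAN}}(C_\rb,\gab)$,
$$C^\ast_\rb(b) \;=\; \frac{p^\ast_\rb(b)}{p^\ast_\rb(b)+p^\ast_\ra(a)\,\ildj{\gab}{\rb}{b}}, \qquad a=\gba(b).$$

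The second step is to eliminate densities in favor of critics. From the second display I would solve algebraically for $p^\ast_\rb(b)$ in terms of $C^\ast_\rb(b)$, $p^\ast_\ra(a)$, and the Jacobian factor $\ildj{\gab}{\rb}{b}$, obtaining a clean identity of the form $p^\ast_\rb(b)(1-C^\ast_\rb(b)) = C^\ast_\rb(b)\,p^\ast_\ra(a)\,\ildj{\gab}{\rb}{b}$. The key structural fact that makes the algebra close is the \emph{Jacobian reciprocity} for inverse maps: since $\gba=\gab^{-1}$ by Eq.~\ref{eq:invert_3}, the inverse function theorem yields $\ildj{\gba}{\ra}{a}\cdot\ildj{\gab}{\rb}{b} = 1$ whenever $b=\gab(a)$. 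Substituting the above expression for $p^\ast_\rb(b)$ into $C^\ast_\ra(a)$ and using this reciprocity to cancel the product of Jacobian determinants yields the claimed right-hand side.

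The main obstacle I expect is purely bookkeeping: keeping careful track of which Jacobian is evaluated at which point (the one in the formula for $C^\ast_\ra$ is $\ildj{\gba}{\ra}{a}$ at $a$, while the one arising from the symmetric formula for $C^\ast_\rb$ is at the corresponding $b=\gab(a)$), and invoking the inverse-function identity at exactly the right moment so that the cross-terms collapse. No new analytic ideas are required beyond (i) the pointwise optimum of $\alpha\log C+\beta\log(1-C)$ used in the original GAN optimality proof, (ii) the change-of-variables formula, and (iii) the chain rule for $\gab\circ\gba=\mathrm{id}$.
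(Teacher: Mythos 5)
Your first step---the pointwise maximization of $\alpha\log C+\beta\log(1-C)$ and the change-of-variables rewriting---matches the paper. The divergence, and the gap, lies in what you take the critics' \emph{fake} distributions to be. You pin both of them to the true data densities: the fake density seen by $C_\ra$ becomes the pushforward of $p^\ast_\rb$ through $\gba$, and the fake density seen by $C_\rb$ becomes the pushforward of $p^\ast_\ra$ through $\gab$. The paper's proof instead works with a single pair of \emph{model} marginals $p_\ra,p_\rb$ (pushforwards of the shared prior through $\gza$ and $\gzb$), uses only the change-of-variables relation $p_\ra(a)=p_\rb(b)\,\ildj{\gba}{\ra}{a}$ between them, and then eliminates the unknown $p_\rb(b)$ via the optimal-critic identity $p_\rb(b)=p^\ast_\rb(b)\bigl(1/C^\ast_\rb(b)-1\bigr)$. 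That elimination is what produces the stated expression; at no point does the paper replace the model densities by pushforwards of the true marginals.

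Under your premises the final algebra does not close to the claimed formula. Write $\alpha=p^\ast_\ra(a)$, $\beta=p^\ast_\rb(b)$, $J=\ildj{\gba}{\ra}{a}$, so that $\ildj{\gab}{\rb}{b}=1/J$. Your two critic formulas read $C^\ast_\ra(a)=\alpha/(\alpha+\beta J)$ and $C^\ast_\rb(b)=\beta J/(\beta J+\alpha)$, which already forces the much simpler relation $C^\ast_\ra(a)=1-C^\ast_\rb(b)$. Your intermediate identity $\beta\,(1-C^\ast_\rb(b))=C^\ast_\rb(b)\,\alpha/J$ is correct, but substituting it into the theorem's right-hand side turns the denominator into $\alpha+C^\ast_\rb(b)\,\alpha$, so the right-hand side evaluates to $C^\ast_\rb(b)/(1+C^\ast_\rb(b))$, which equals $1-C^\ast_\rb(b)$ only at the single value $C^\ast_\rb(b)=(\sqrt{5}-1)/2$. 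The Jacobian reciprocity you invoke is precisely what collapses the intended cross-term, so the step you describe as ``yielding the claimed right-hand side'' is unverified and in fact fails; what your route proves is the different identity $C^\ast_\ra(a)=1-C^\ast_\rb(b)$. To recover the paper's statement you must leave $p_\rb(b)$ as an unknown model density and express it only through $C^\ast_\rb(b)$, rather than through $p^\ast_\ra$ and a Jacobian. (Separately, be aware that carrying out the paper's own substitution carefully gives $C^\ast_\rb(b)\,p^\ast_\ra(a)$ as the first denominator term, so the displayed statement appears to drop a factor of $C^\ast_\rb(b)$ there; this is an issue with the paper's algebra, not a vindication of your route.)
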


\textit{Proof.} See Appendix A.2. In essence, the above result shows that the optimal critic for one domain, w.l.o.g. say $\ra$, can be directly obtained via the optimal critic of another domain $\rb$ for any choice of the invertible mapping $\gab$, assuming access to the data marginals $p^\ast_\ra$ and $p^\ast_\rb$.

\subsection{Exact Cycle Consistency}

\begin{figure}[t]
\centering

\begin{subfigure}[b]{\columnwidth}
\centering
 \begin{tikzpicture}
 \node[circle] (a) at (-2,1) [draw, minimum width=0.5cm,minimum height=0.5cm] {$\ra$};
 \node[circle] (b) at (2,1) [draw, minimum width=0.5cm,minimum height=0.5cm] {$\rb$};
\node[circle] (ca) at (-2,-1) [draw, minimum width=0.5cm,minimum height=0.5cm] {$\ry_\ra$};
\node[circle] (cb) at (2,-1) [draw, minimum width=0.5cm,minimum height=0.5cm] {$\ry_\rb$};
 \foreach \from/\to in {a/b}
\draw [->] (a) to [out=30,in=150] node [above] {$\gab$} (b);
\draw [->] (b) to [out=210,in=330] node [below] {$\gba$} (a);
\draw [->] (a) -- node [left] {$C_\ra$} (ca);
\draw [->] (b) -- node [right] {$C_\rb$} (cb);
  \end{tikzpicture}
  \caption{CycleGAN}
\end{subfigure}
~~~~~~~~~~~~~~~~~~~~~~~~~~~~~~~~~~~~~~~~~~~~~~~~~~~~~~~~~~~~~~~~~~~~~~~~~~~~~~~~~~~~~~~~~~~~~~~~~~~~
\begin{subfigure}[b]{\columnwidth}
\centering
 \begin{tikzpicture}
 \node[circle] (a) at (-2,1) [draw, minimum width=0.5cm,minimum height=0.5cm] {$\ra$};
 \node[circle] (b) at (2,1) [draw, minimum width=0.5cm,minimum height=0.5cm] {$\rb$};
  \node[circle] (z) at (0,2.5) [draw, minimum width=0.5cm,minimum height=0.5cm] {$\rz$};
  \node[circle] (ca) at (-2,-1) [draw, minimum width=0.5cm,minimum height=0.5cm] {$\ry_\ra$};
\node[circle] (cb) at (2,-1) [draw, minimum width=0.5cm,minimum height=0.5cm] {$\ry_\rb$};
 \foreach \from/\to in {z/a, b/z}
\draw [->] (\from) -- (\to);
\draw [->] (a) -- node [above left] {$\gaz=\gza^{-1}$}(z);
\draw [->] (z) -- node [above right] {$\gbz=\gzb^{-1}$}(b);
\draw [->] (a) -- node [left] {$C_\ra$} (ca);
\draw [->] (b) -- node [right] {$C_\rb$} (cb);
  \end{tikzpicture}
  \caption{\name{}}
\end{subfigure}
\caption{CycleGAN v.s. \name{} for unpaired cross-domain translation. Unlike CycleGAN, \name{} specifies a single invertible mapping $\gaz \circ \gbz^{-1}$ that is exactly cycle-consistent, represents a shared latent space $\rz$ between the two domains, and can be trained via both adversarial training and exact maximum likelihood estimation. Double-headed arrows denote invertible mappings. $\ry_\ra$ and $\ry_\rb$ are random variables denoting the output of the critics used for adversarial training. 
}
\label{fig:cyclegan_vs_alignflow}
 \end{figure}
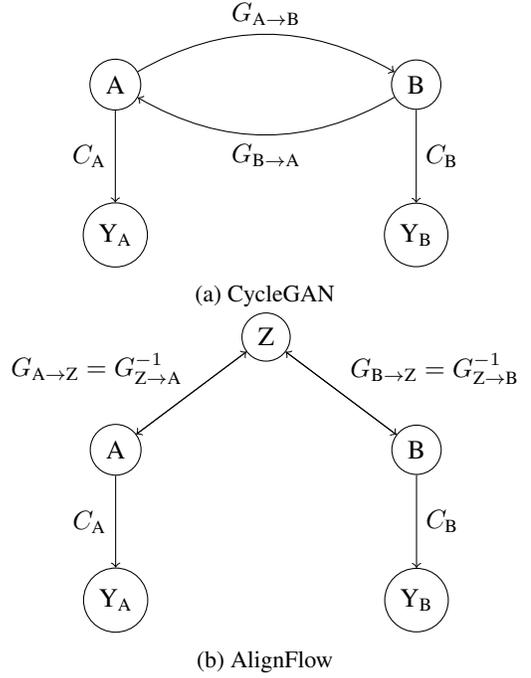
 
 So far, we have only discussed objectives that are marginally-consistent with respect to data distributions $p^\ast_\ra$ and $p^\ast_\rb$.
However, many domain alignment tasks such as cross-domain translation require can be cast as learning a joint distribution $p^\ast_{\ra,\rb}$. 
As discussed previously, this problem is underconstrained given unpaired datasets $\calD_\ra$ and $\calD_\rb$ and the learned marginal densities alone do not guarantee learning a mapping that is useful for downstream tasks. 
Cycle consistency, as proposed in CycleGAN~\citep{cycle-gan}, is a highly effective learning objective that encourages learning of meaningful cross-domain mappings such that the data translated from domain $\ra$ to $\rb$ via $\gab$ to be mapped back to the original datapoints in $\ra$ via $\gba$. That is, $\gba(\gab(a)) \approx a$ for all $a \in \ra$. 
Formally, the cycle-consistency loss for translation from $\ra$ to $\rb$ and back is defined as:
\begin{align}\label{eq:cyc_cons}
    \mathcal{L}_{\textnormal{Cycle}}(\gba, \gab)&=  E_{a \sim p^\ast_\ra}[ \Vert\gba(\gab(a)) - a  \Vert_1 ].
\end{align}
 Symmetrically, we have a cycle consistency term $\mathcal{L}_{\textnormal{Cycle}}(\gab, \gba)$ in the reverse direction that encourages $\gab(\gba(b)) \approx b$ for all $b \in \rb$. 
 Next, we show that \name{} is \textit{exactly} cycle consistent.

\begin{proposition}
 Let $\calG$ denote the class of invertible mappings represented by an arbitrary \name{} architecture.
 For any $\gba \in \calG$, we have:
 \begin{align}
     \mathcal{L}_{\textnormal{Cycle}}(\gba, \gab) = 0\\
     \mathcal{L}_{\textnormal{Cycle}}(\gab, \gba) = 0
 \end{align}
 where $\gab=\gba^{-1}$ by design.
\end{proposition}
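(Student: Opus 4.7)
The plan is to reduce the claim to a direct algebraic consequence of the invertibility-by-construction established in Eqs.~\ref{eq:invert_1}--\ref{eq:invert_3}. First I would unpack the definition of $\mathcal{L}_{\textnormal{Cycle}}(\gba,\gab)$ from Eq.~\ref{eq:cyc_cons}, namely an expectation under $p^\ast_\ra$ of $\Vert \gba(\gab(a)) - a\Vert_1$. The whole point of representing the cross-domain mappings as compositions of normalizing flows through the shared latent space is that $\gab = \gzb \circ \gaz$ and $\gba = \gza \circ \gbz$ satisfy $\gab^{-1} = \gba$ as a set-theoretic identity, not as an approximate training target.

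Next I would simply plug this in: for any $a \in \ra$,
\begin{align*}
\gba(\gab(a)) &= (\gza \circ \gbz) \circ (\gzb \circ \gaz)(a) \\
&= \gza \circ (\gbz \circ \gzb) \circ \gaz(a) \\
&= \gza \circ \gaz(a) = a,
\end{align*}
where the middle equality uses $\gbz = \gzb^{-1}$ and the last uses $\gaz = \gza^{-1}$, both of which hold by the AlignFlow design. Hence $\Vert \gba(\gab(a)) - a \Vert_1 = 0$ pointwise, and therefore its expectation under any distribution on $\ra$ (in particular $p^\ast_\ra$) vanishes, giving $\mathcal{L}_{\textnormal{Cycle}}(\gba,\gab)=0$.

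The symmetric claim $\mathcal{L}_{\textnormal{Cycle}}(\gab,\gba)=0$ follows by the identical argument with the roles of $\ra$ and $\rb$ swapped, using $\gab(\gba(b)) = b$ for all $b \in \rb$.

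There is no real obstacle here: the proposition is essentially a restatement of Eq.~\ref{eq:invert_3}, and the only thing to be careful about is to emphasize that the identity holds pointwise (not just in expectation or up to a training loss), which is exactly what distinguishes AlignFlow from CycleGAN as highlighted in the surrounding discussion and Figure~\ref{fig:cyclegan_vs_alignflow}. I would therefore keep the proof to a few lines and use it as an opportunity to underline that no soft cycle-consistency penalty is needed in the AlignFlow objective of Eq.~\ref{eq:\name{}}.
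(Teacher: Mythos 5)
Your argument is correct and is exactly the paper's proof: the paper dispatches the proposition in one sentence by citing the invertible design in Eq.~\ref{eq:invert_3} together with the definition of the cycle loss in Eq.~\ref{eq:cyc_cons}, which is precisely the pointwise identity $\gba(\gab(a))=a$ that you spell out. Your explicit composition through the latent space and the remark that the expectation of a pointwise-zero quantity vanishes under any distribution just make explicit what the paper leaves implicit.
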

The proposition follows directly from the invertible design of the \name{} framework (Eq.~\ref{eq:invert_3}) and Eq.~\ref{eq:cyc_cons}.

\subsubsection{Comparison with CycleGAN}
We illustrate and compare \name{} and CycleGAN in Figure~\ref{fig:cyclegan_vs_alignflow}.
CycleGAN parameterizes two independent cross-domain mappings $\gab$ and $\gba$, whereas \name{} only specifies a single, invertible mapping.
Learning in a CycleGAN is restricted to an adversarial training objective along with additional cycle-consistent loss terms.
In contrast, \name{} is exactly consistent and can be trained via adversarial learning, MLE, or a hybrid (Eq.~\ref{eq:\name{}}) without the need for additional loss terms to enforce cycle consistency.
Finally, inference in CycleGAN is restricted to conditional sampling since it does not involve any latent variables $\rz$ with easy-to-sample prior densities.
As described previously, \name{} permits both conditional and unconditional sampling.

\subsubsection{Comparison with UNIT and CoGAN} Models such as CoGAN~\citep{liu2016coupled} and its extension UNIT~\citep{liu2017unsupervised} also consider adding a shared-space constraint between two different domain decoders.
These models again can only enforce approximate cycle consistency and introduce additional encoder networks.
Moreover, they only approximate lower bounds to the log-likelihood unlike \name{} which permits exact MLE training.

\begin{figure*}[ht]
\begin{center}
\centerline{\includegraphics[width=0.9\textwidth]{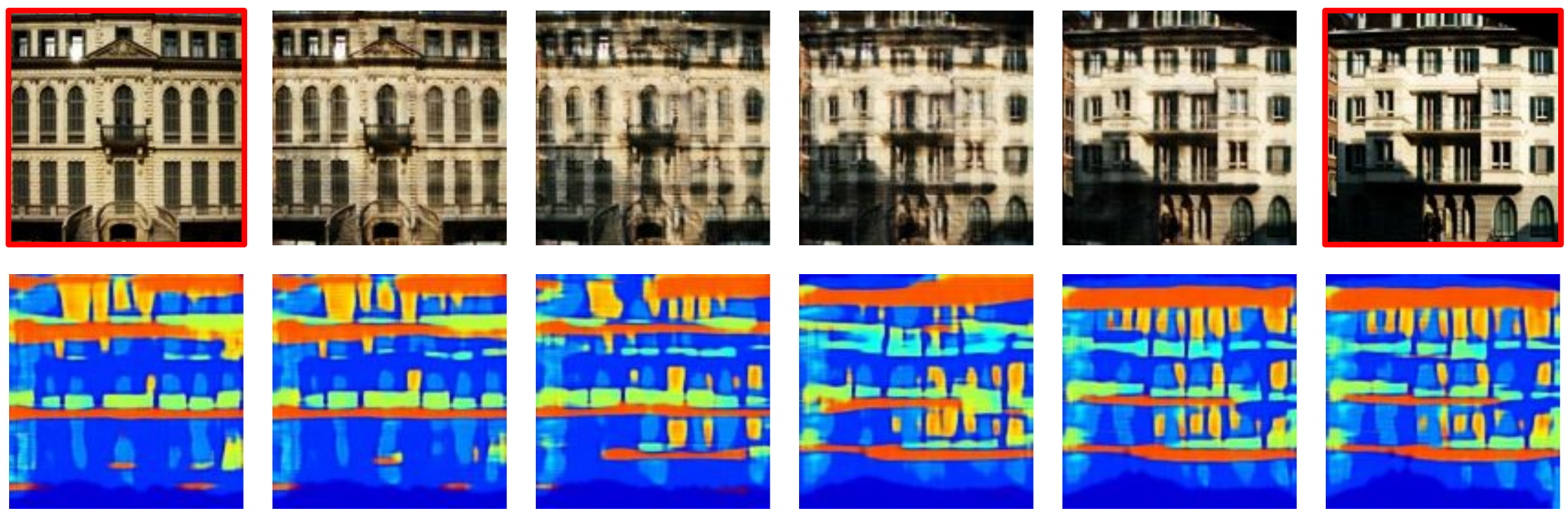}}
\caption{Latent space interpolation on Facades. 
\textbf{Top:} Left- and right-most images are sampled from $\calD_\ra$ (red boxes).
Interpolation is then performed in latent space and then decoded using $\gza$.
We see semantically meaningful changes across the row, \textit{e.g.,} in the shadow and the style of entrance to the building.
\textbf{Bottom:} For each image in the top row, its latent representation is decoded into the target domain using $\gzb$. 
Inspection of the orange regions indicates a change from 3 floors (left) to 4 floors (right).}
\label{fig:facades-interpolation}
\end{center}
\end{figure*}
\begin{table}[t]
    \centering
    \caption{Mean Squared Error (MSE) comparing CycleGAN and variants of \name{}(AF) on paired test sets. MSE is computed pixelwise after normalizing images to $(-1, 1)$.}
    \label{table:mse-results}
    \begin{tabular}{@{}l|lcc@{}}
    \toprule
   Dataset                  &   Model  & MSE ($\ra \to \rb$)      & MSE ($\rb \to \ra$)      
   \\ \midrule
     \parbox[t]{2mm}{\multirow{3}{*}{\rotatebox[origin=c]{90}{Facades}}} 
       & CycleGAN                        & 0.7129          & 0.3286          
       \\
    &AF (ADV only)         & 0.6727	        & 0.2679         
    \\
    &AF (Hybrid)              & \textbf{0.5801} & \textbf{0.2512} 
    \\
    &AF (MLE only)    & 0.9014          & 0.5960          
    \\ \midrule
    \parbox[t]{2mm}{\multirow{3}{*}{\rotatebox[origin=c]{90}{Maps}}}  & CycleGAN                           & 0.0245          & 0.0953          
    \\
    &AF (ADV only)           & 0.0385	        & 0.1123         
    \\
    &AF (Hybrid)                & \textbf{0.0209} & \textbf{0.0897} 
    \\
    &AF (MLE only)       & 0.0452          & 0.1746          
    \\ \midrule

    \parbox[t]{2mm}{\multirow{3}{*}{\rotatebox[origin=c]{90}{CityScapes}}}  &CycleGAN                      & 0.1252          & \textbf{0.1200}         
    \\
    &AF (ADV only)     & 0.2569          & 0.2196         
    \\
    &AF (Hybrid)         & \textbf{0.1130}          & 0.1462         
    \\
    &AF (MLE only) & 0.2526          & 0.2272         
    \\
    \bottomrule
    \end{tabular}
\end{table}
\section{Experimental Evaluation}

To achieve our two goals of data-efficient modeling of individual domains and effective cross-domain mappings, we evaluate \name{} on two tasks: 
(a) unsupervised image-to-image translation,
 and (b) unsupervised domain adaptation.
For additional experimental details, results, and analysis beyond those stated below, we refer the reader to Appendix~B.

\subsection{Image-To-Image Translation}

We evaluate \name{} on three image-to-image translation datasets used by \citet{cycle-gan}: Facades, Maps, and CityScapes~\citep{cityscapes}.
These datasets are chosen because they provide one-to-one aligned image pairs, so one can quantitatively evaluate unpaired image-to-image translation models via a distance metric such as mean squared error (MSE) between generated examples and the corresponding ground truth. 
While MSE can be substituted for perceptual losses in other scenarios, it is a suitable metric for evaluating datasets with one-to-one ground pairings.
Note that the task at hand is unpaired translation and hence, the pairing information is omitted during training and only used for evaluation.  

We report the MSE for translations on the test sets after cross-validation of hyperparameters in Table \ref{table:mse-results}.
For hybrid models, we set $\lambda_\ra=\lambda_\rb$ and report results for the best values of these hyperparameters. 
We observe that while learning \name{} via adversarial training or MLE alone is not as competitive as CycleGAN, hybrid training of \name{} significantly outperforms CycleGAN in almost all cases.
Specifically, we observe that MLE alone typically performs worse than adversarial training, but together both these objectives seem to have a regularizing effect on each other.
Qualitative interpolations on the Facades dataset are shown in Figure~\ref{fig:facades-interpolation}.

\subsection{Unsupervised Domain Adaptation}

\begin{table*}[t]
    \centering
    \caption{Test classification accuracies for domain adaptation from source $\to$ target. The \textbf{source only} and \textbf{target only} models directly use classifiers trained on the source and target datasets respectively. Baseline numbers taken from the cited works.}
    \label{table:domain-adaptation}
    \begin{tabular}{@{}lcccc@{}}
     \toprule
     Model & MNIST $\to$ USPS & USPS $\to$ MNIST & SVHN $\to$ MNIST \\
     \midrule
     source only & 82.2 $\pm$ 0.8 & 69.6 $\pm$ 3.8 & 67.1 $\pm$ 0.6 \\
     ADDA~\citep{adda} & 89.4 $\pm$ 0.2 & 90.1 $\pm$ 0.8 & 76.0 $\pm$ 1.8 \\
     CyCADA~\citep{cycada} & 95.6 $\pm$ 0.2 & 96.5 $\pm$ 0.1 & 90.4 $\pm$ 0.4\\
     UNIT~\citep{liu2017unsupervised} & 95.97 & 93.58 & 90.53\\ 
      \name{} & \textbf{96.2 $\pm$ 0.2} & \textbf{96.7 $\pm$ 0.1} & \textbf{91.0 $\pm$ 0.3} \\ \midrule
     target only & 96.3 $\pm$ 0.1 & 99.2 $\pm$ 0.1 & 99.2 $\pm$ 0.1  \\
    \bottomrule
    \end{tabular}
\end{table*}

\begin{figure}[t]
    \centering
    \begin{subfigure}[b]{0.45\columnwidth}
    \centering
    \includegraphics[width=\columnwidth]{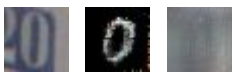}
    \caption{}
    \end{subfigure}
    \begin{subfigure}[b]{0.45\columnwidth}
    \centering
    \includegraphics[width=\columnwidth]{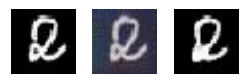}
    \caption{}
    \end{subfigure}
    \begin{subfigure}[b]{0.45\columnwidth}
    \centering
    \includegraphics[width=\columnwidth]{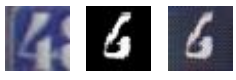}
    \caption{}
    \end{subfigure}
    \begin{subfigure}[b]{0.45\columnwidth}
    \centering
    \includegraphics[width=\columnwidth]{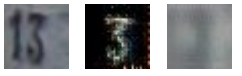}
    \caption{} 
    \end{subfigure}
    \caption{Examples of failure modes for CycleGAN reconstructions in SVHN $\leftrightarrow$ MNIST cross-domain translation. In each group of three images in (a-d), a real example from the source domain (SVHN) is shown on the left, the translated image in the target domain (MNIST) is shown at center, and the reconstructed image in the source domain based on the translation is shown on the right.}
    \label{fig:cycle-failures}
\end{figure}
\begin{figure*}[t]
    \centering
    \begin{subfigure}[b]{\columnwidth}
    \centering
    \includegraphics{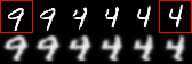}
    \caption{MNIST$\to$USPS}
    \end{subfigure}
    ~~~~  ~~~~  
    \begin{subfigure}[b]{\columnwidth}
    \centering
    \includegraphics{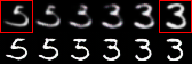}
    \caption{USPS$\to$MNIST}
    \end{subfigure}
    
    \caption{Multi-domain latent space interpolations. 
    \textbf{Top:} Left-most and right-most images are sampled from $\calD_\ra$ (in red boxes).
Interpolation is then performed in latent space and then decoded using $\gza$.
\textbf{Bottom:} For each corresponding image in the top row, its latent representation is decoded into the target domain using $\gzb$.
    Note how both class identity and style are preserved in the interpolated pairs of digits in the two domains.
    Also, notice that the USPS images (even the true ones in red boxes) are slightly blurred due to the upscaling applied as standard preprocessing.
    }
    \label{fig:mnist-interpolation}
\end{figure*}

In unsupervised domain adaptation~\citep{saenko2010adapting}, we are given data from two related domains: a source and a target domain.
For the source, we have access to both the input datapoints and their labels. For the target, we are only provided with input datapoints without any labels. 
Using the available data, the goal is to learn a classifier for the target domain.
We extend \citet{cycada} to use an \name{} architecture and objective (adversarially trained Real-NVPs~\citep{real-nvp} here) in place of CycleGAN for this task. 

A variety of algorithms have been proposed for the above task which seek to match pixel-level or feature-level distributions across the two domains.
See Appendix B for more details.
For fair and relevant comparison, we compare against baselines Cycada~\citep{cycada} and UNIT~\citep{liu2017unsupervised} which involve pixel-level translations and are closest to the current work.
We evaluate across all pairs of source and target datasets as in \citet{cycada} and \citet{liu2017unsupervised}: MNIST, USPS, SVHN, which are all image datasets of handwritten digits with 10 classes.
In Table~\ref{table:domain-adaptation}, we see that \name{} outperforms both Cycada~\cite{cycada} (based on CycleGAN) and UNIT~\citep{liu2017unsupervised} in all cases.
Combining \name{} with other state-of-the-art 
adaptation approaches e.g.,~\citet{shu2018dirt},~\citet{long2018conditional},~\citet{kumar2018co},~\citet{liu2018unified},~\citet{sankaranarayanan2018generate},~\citet{liu2018detach} is an interesting direction for future work.

In Figure~\ref{fig:cycle-failures}, we show some failure modes of using approximately cycle-consistent objectives for the Cycada model. 
Notice that the image label and style changes or becomes unrecognizable in translating and reconstructing the input.
In contrast, \name{} is exactly cycle consistent and hence, the source reconstructions based on the translated images will be exactly the source image by design.

\subsection{Multi-Domain Concurrent Interpolations}

The use of a shared latent space in \name{} allows us to perform paired interpolations in two domains simultaneously.
While pure MLE without any parameter sharing does not give good alignment, pure adversarial training cannot be used for unconditional sampling since the prior $p_\rz$ is inactive.
Hence, we use \name{} models trained via a hybrid objective for latent space interpolations.
In particular, we sample two datapoints
$a', a''\in\calD_{\ra}$ and obtain their latent representations $z', z''\in\mathcal{Z}$ via $\gza$. 
Following~\citet{real-nvp}, we compute interpolations in the polar space as $\tilde{z} = z'\sin\phi + z''\cos\phi$ for several values of $\phi\in (0, 2\pi)$. 
Finally, we map $\tilde{z}$ to either back to domain $\ra$ via $\gza$ and $\rb$ via $\gzb$. 
We show this empirically on the MNIST/USPS datasets in Figure~\ref{fig:mnist-interpolation}.
We see that many aspects of style and content are preserved in the interpolated samples.

\section{Related Work}\label{sec:related}
A key assumption in unsupervised domain alignment is the existence of a deterministic or stochastic mapping $\gab$ such that the distribution of $\rb$ matches that of $\gab(\ra)$, and vice versa. This assumption can be incorporated as a marginal distribution-matching constraint into the objective using an adversarially-trained GAN critic~\citep{gan}. However, this objective is under-constrained. 
To partially mitigate this issue, CycleGAN~\citep{cycle-gan}, DiscoGAN~\citep{kim2017learning}, and DualGAN~\citep{yi2017dualgan} added an approximate cycle-consistency constraint that encourages $\gba \circ \gab$ and $\gab \circ \gba$ to behave like identity functions on domains $\ra$ and $\rb$ respectively. 
While cycle-consistency is empirically very effective, alternatives based on variational autoencoders that do not require either cycles or adversarial training have also been proposed recently~\citep{hoshen2018non,hoshen2018nam}.

Models such as CoGAN~\citep{liu2016coupled}, UNIT~\citep{liu2017unsupervised}, and CycleGAN~\citep{cycle-gan} have since been extended to enable one-to-many mappings~\citep{huang2018multimodal,zhu2017toward} as well as multi-domain alignment~\citep{choi2018stargan}. Our work focuses on the one-to-one unsupervised domain alignment setting. In contrast to previous models, \name{} leverages both a shared latent space and \emph{exact} cycle-consistency. To our knowledge, \name{} provides the first demonstration that invertible models can be used successfully in lieu of the cycle-consistency objective. Furthermore, \name{} allows the incorporation of exact maximum likelihood training, which we demonstrated to induce a meaningful shared latent space that is amenable to interpolation.

\section{Conclusion \& Future Work}
We presented \name{}, a generative framework for learning from multiple data sources based on normalizing flow models.
\name{} has several attractive properties: it guarantees exact cycle-consistency via a single cross-domain mapping, learns a shared latent space across two domains, and permits a flexible training objective which can combine adversarial training and exact maximum likelihood estimation.
Theoretically, we derived conditions under which the \name{} model learns marginals that are consistent with the underlying data distributions.
Finally, our empirical evaluation demonstrated significant gains on unsupervised domain translation and adaptation, and an increase in inference capabilities, e.g., paired interpolations in the latent space for two domains.

In the future, we plan to consider extensions of \name{} for learning stochastic, multimodal mappings~\citep{zhu2017toward} and translations across more than two domains~\citep{choi2018stargan}.
Exploring recent advancements in invertible architectures e.g.,~\citet{ho2019flow++},~\citet{huang2018neural},~\citet{chen2018neural},~\citet{grathwohl2018ffjord} within \name{} is another natural promising direction for future work.
With a handle on model likelihoods and invertible inference, we are optimistic that \name{} can aid the characterization of useful structure that guarantees identifiability in underconstrained problems such as domain alignment~\citep{cui2014generalized,galanti2017role,alvarez2018towards,wu2019domain}.

\subsection*{Acknowledgements}
We are thankful to Kristy Choi, Daniel Levy, and Kelly Shen for helpful feedback.
This research was supported by Samsung, FLI, AWS.
AG is additionally supported by a Lieberman Fellowship and a Stanford Data Science Scholarship.
\fontsize{9pt}{10pt} \selectfont
\bibliography{refs}
\bibliographystyle{aaai}
\clearpage

\section*{Appendices}
\begin{appendix}

\section{Proofs of Theoretical Results}
\subsection{Proof of Theorem~\ref{thm:full_consistency}}\label{app:proof_full_consistency}

\begin{proof}

Since the maximum likelihood estimate minimizes the KL divergence between the data and model distributions, the optimal value for $\mathcal{L}_{\textnormal{MLE}}(\gza)$ is attained at a marginally-consistent mapping, say $\gza^\ast$.
Symmetrically, there exists a marginally-consistent mapping $\gzb^\ast$ that optimizes $\mathcal{L}_{\textnormal{MLE}}(\gzb)$.

From Theorem 1 of \citet{gan}, we know that the cross-entropy GAN objective $\mathcal{L}_{\textnormal{GAN}}(C_\ra, \gba)$ is globally minimized when $p_{\ra}=p^\ast_\ra$ and critic is Bayes optimal.
Further, from Lemma~\ref{thm:mle_implies_adv_consistency}, we know that  $\gba^\ast$ is marginally-consistent w.r.t. ($p^\ast_\ra,p^\ast_\rb$).
Hence, $\gba^\ast$ globally minimizes $\mathcal{L}_{\textnormal{GAN}}(C_\ra, \gba)$. 
Symmetrically, $\gab^\ast=\gba^{\ast^{-1}}$ globally minimizes $\mathcal{L}_{\textnormal{GAN}}(C_\rb, \gab)$.

Since $\gba^\ast=\gza^\ast \circ {\gzb^{\ast^{-1}}}$ globally optimizes all the individual loss terms in the \name{} objective in Eq.~\ref{eq:\name{}}, it globally optimizes the overall objective for any value of $\lambda_{\ra} \geq 0, \lambda_{\rb} \geq 0$.

\end{proof}

\subsection{Proof of Theorem~\ref{thm:opt_critics}}\label{app:proof_opt_critics}
\begin{proof}
First, we note that only the GAN loss terms depend on $C_\ra$ and $C_\rb$. Hence, the MLE terms are constants for a fixed $\gba$ and hence, can be ignored for deriving the optimal critics.
Next, for any GAN trained with the cross-entropy loss as specified in Eq~\ref{eq:adv_2}, we know that the Bayes optimal critic $C^\ast_\ra$ prediction for any $a\in \ra$ is given as:
\begin{align}\label{eq:opt_ca}
    C^\ast_\ra(a) &= \frac{p^\ast_\ra(a)}{p^\ast_\ra(a) + p_\ra(a)}
\end{align}
See Proposition 1 in \citet{gan} for a proof. 

We can relate the densities $p_\ra(a)$ and $p_\rb(b)$ via the change of variables as:
\begin{align}\label{eq:pab}
    p_\ra(a) &= p_\rb(b) \ildj{\gba}{\ra}{a}
\end{align}
where $b=\gab(a)$.

Substituting the expression for density of $p_\ra(a)$ from Eq.~\ref{eq:pab} in  Eq.~\ref{eq:opt_ca}, we get:
\begin{align}\label{eq:opt_ca_2}
    C^\ast_\ra(a) &= \frac{p^\ast_\ra(a)}{p^\ast_\ra(a) + p_\rb(b) \ildj{\gba}{\ra}{a}}
\end{align}
where $b=\gab(a)$.

Symmetrically, using Proposition 1 in \citet{gan} we have the Bayes optimal critic $C^\ast_\rb$ for any $b\in \rb$  given as:
\begin{align}\label{eq:opt_cb}
    C^\ast_\rb(b) &= \frac{p^\ast_\rb(b)}{p^\ast_\rb(b) + p_\rb(b)}.
\end{align}

Rearranging terms in Eq.~\ref{eq:opt_cb}, we have:
\begin{align}\label{eq:opt_cb_2}
    p_\rb(b) = p^\ast_\rb(b) \left(\frac{1}{C^\ast_\rb(b)}-1\right)
\end{align}
for any $b \in \rb$.

Substituting the expression for density of $p_\rb(b)$ from Eq.~\ref{eq:opt_cb_2} in Eq.~\ref{eq:opt_ca_2}, we get:
\begin{align}\label{eq:opt_ca_3}
    C^\ast_\ra(a) &= \frac{C^\ast_\rb(b)p^\ast_\ra(a)}{p^\ast_\ra(a) + p^\ast_\rb(b) (1-C^\ast_\rb(b)) \ildj{\gba}{\ra}{a}} 
\end{align}
where $b=\gab(a)$.

\end{proof}

\subsection{Non-Identifiability of Cross-Domain Mappings}\label{app:mle_non_id}

As discussed, marginal consistency along with invertibility can only reduce the underconstrained nature of the unpaired cross-domain translation problem, but not completely eliminate it.
In the following result, we identify one such class of non-identifiable model families for the MLE-only objective of \name{} ($\lambda_\ra=\infty, \lambda_\rb=\infty$).
We will need the following definitions.
\begin{definition}
Let $\calS_n$ denotes the symmetric group on $n$ dimensional permutation matrices.
A function class for the cross-domain mappings $\calG$ is closed under permutations iff for all $\gba \in \calG$, $S \in \calS_n$, we have $\gba \circ S \in \calG$.
\end{definition}
\begin{definition}
A density $p_\rx$ is symmetric iff for all $x \in \calX \subseteq \mathbb{R}^n, S \in \calS_n$, we have $p_\rx(x) = p_\rx(Sx)$.
\end{definition}
Examples of distributions with symmetric densities include the isotropic Gaussian and Laplacian distributions.




\begin{proposition}\label{thm:mle_only}
Consider the case where $\gba^\ast \in \calG$, and $\calG$ is closed under permutations.
For a symmetric prior $p_\rz$ (e.g., isotropic Gaussian), there exists an optimal solution $\gba^{\dagger}\in \calG$ to the \name{} objective (Eq.~\ref{eq:\name{}}) for $\lambda_{\ra}=\lambda_{\rb}=\infty$  such that $\gba^{\dagger} \neq \gba^\ast$. 
\end{proposition}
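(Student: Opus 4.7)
The plan is to exploit the two degrees of freedom hidden inside the factorization $\gba = \gza \circ \gzb^{-1}$: once $\gza^\ast$ and $\gzb^\ast$ are fixed marginally-consistent optima, we can post-compose one of them with a non-trivial permutation $S \in \calS_n$ and still land at a marginally-consistent mapping, while producing a \emph{different} cross-domain composition.

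First, I would set up the MLE-only optimum. When $\lambda_\ra = \lambda_\rb = \infty$, the only surviving terms in $\mathcal{L}_{\textnormal{\name{}}}$ are $-\mathcal{L}_{\textnormal{MLE}}(\gza) - \mathcal{L}_{\textnormal{MLE}}(\gzb)$, which are minimized exactly when $\gza$ and $\gzb$ are marginally-consistent with $(p^\ast_\ra, p_\rz)$ and $(p^\ast_\rb, p_\rz)$ respectively. By hypothesis some such pair $(\gza^\ast, \gzb^\ast)$ exists and $\gba^\ast = \gza^\ast \circ \gzb^{\ast -1}$.

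Next, I would prove the key lemma: if $\gza^\ast$ is marginally-consistent with $(p^\ast_\ra, p_\rz)$ and $S \in \calS_n$, then $\gza^\dagger := \gza^\ast \circ S$ is also marginally-consistent with $(p^\ast_\ra, p_\rz)$. The argument is a one-line change-of-variables: writing $\gaz^\dagger = S^{-1} \circ \gaz^\ast$, we have $p_\rz(\gaz^\dagger(a)) \, \lvert \det \partial \gaz^\dagger / \partial a \rvert = p_\rz(S^{-1} \gaz^\ast(a)) \cdot \lvert \det S^{-1} \rvert \cdot \lvert \det \partial \gaz^\ast / \partial a \rvert$. Symmetry of $p_\rz$ collapses the first factor to $p_\rz(\gaz^\ast(a))$, and $\lvert \det S^{-1} \rvert = 1$ since $S$ is a permutation, so the right-hand side equals $p^\ast_\ra(a)$ by marginal consistency of $\gza^\ast$.

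Finally, I would assemble the non-identifiable example. Pick any non-identity $S \in \calS_n$ and set $\gza^\dagger := \gza^\ast \circ S$, $\gzb^\dagger := \gzb^\ast$, giving $\gba^\dagger = \gza^\ast \circ S \circ \gzb^{\ast -1}$. By the closure of $\calG$ under permutations, $\gba^\dagger \in \calG$, and both $\gza^\dagger, \gzb^\dagger$ are MLE-optimal by the previous step, so $\gba^\dagger$ globally minimizes the $\lambda = \infty$ objective. To conclude $\gba^\dagger \neq \gba^\ast$, observe that $\gza^\ast \circ S \circ \gzb^{\ast -1} = \gza^\ast \circ \gzb^{\ast -1}$ would, by injectivity of $\gza^\ast$ and surjectivity of $\gzb^{\ast -1}$ onto $\rz$, force $Sz = z$ for every $z$, contradicting $S \neq I$.

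The main obstacle is not any hard calculation but a bookkeeping subtlety: interpreting the closure-under-permutations condition so that $\gba^\dagger$ genuinely lands in $\calG$. The construction sits permutations \emph{between} $\gza^\ast$ and $\gzb^{\ast -1}$ rather than at the boundary, so I would either (i) take the closure hypothesis as applying at the level of the factor classes $\calG_\ra, \calG_\rb$ used to build $\calG$, or (ii) rewrite $\gba^\dagger = \gba^\ast \circ T$ where $T := \gzb^\ast \circ S \circ \gzb^{\ast -1}$ and note that for suitable model classes (e.g.\ ones parameterized so that conjugating by $\gzb^\ast$ preserves membership) this reduces to the stated closure. Apart from this interpretive point, every other step reduces to the symmetry of $p_\rz$ and the fact that permutation matrices have unit Jacobian.
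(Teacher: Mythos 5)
Your proposal is correct and is essentially the paper's own argument: insert a non-identity permutation $S$, use symmetry of $p_\rz$ together with the unit Jacobian of permutations to leave both MLE terms unchanged, and invoke closure of $\calG$ to conclude that the perturbed map is a distinct global optimum. If anything you are more careful than the paper — the paper writes the perturbation as $\gba^\ast S$ and appeals to closure directly, but its appeal to prior symmetry only makes sense with the permutation placed in the latent space as you do, so the "bookkeeping subtlety" you flag about where $S$ sits relative to the closure hypothesis is a real wrinkle in the paper's own formalization, and your explicit verification that $\gba^\dagger \neq \gba^\ast$ fills a step the paper leaves implicit.
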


\begin{proof}
We will prove the proposition via contradiction. 
That is, let's assume that $\gba^\ast$ is a unique solution for the \name{} objective for $\lambda_{\ra}=\lambda_{\rb}=\infty$ (Eq.~\ref{eq:\name{}}). Now, consider an alternate mapping $\gba^{\dagger}=\gba^{\ast}S$ for an arbitrary non-identity permutation matrix $S\neq I$ in the symmetric group.

As before, we note that $\gba^{\ast}= \gza^{\ast} \circ{\gzb^{\ast^{-1}}}$ and $\gba^{\dagger}= \gza^{\dagger}\circ{\gzb^{\dagger^{-1}}}$ due to the invertibility constraints in Eqs.~\ref{eq:invert_1}-\ref{eq:invert_3}.
Since permutation matrices are invertible and so is $\gba^{\ast}$, their composition given by $\gba^{\dagger}$ is also invertible.
Further, since $\calG$ is closed under permutation and $\gba^{\ast} \in \calG$, we also have $\gba^{\dagger} \in \calG$.

Next, we note that the inverse of a permutation matrix is also a permutation matrix. Since the prior is assumed to be symmetric and a a transformation specified by a permutation matrix is volume-preserving (i.e., $\rm{det}(S) =1$ for all $S \in \calS_n$), we can use the change-of-variables formula in Eq.~\ref{eq:cov} to get:
\begin{align}
    \mathcal{L}_{\textnormal{MLE}}(\gza^{\ast}) &= \mathcal{L}_{\textnormal{MLE}}(\gza^{\dagger})\\
    \mathcal{L}_{\textnormal{MLE}}(\gzb^{\ast}) &= \mathcal{L}_{\textnormal{MLE}}(\gzb^{\dagger}).
\end{align}
Noting that $\gba^{\ast}= \gza^{\ast} \circ {\gzb^{\ast^{-1}}}$ and $\gba^{\dagger}= \gza^{\dagger}\circ {\gzb^{\dagger^{-1}}}$ due to the invertibility constraints in Eqs.~\ref{eq:invert_1}-\ref{eq:invert_3}, we can substitute the above equations in Eq.~\ref{eq:\name{}}. 
When $\lambda_{\ra}=\lambda_{\rb}=\infty$, for any choice of $C_\ra, C_\rb$ we have:
\begin{align}
    &\mathcal{L}_{\textnormal{\name{}}}(\gba^{\ast}, C_\ra, C_\rb, \lambda_{\ra}=\infty, \lambda_{\rb}=\infty)\nonumber\\
    &= \mathcal{L}_{\textnormal{\name{}}}(\gba^{\dagger}, C_\ra, C_\rb, \lambda_{\ra}=\infty, \lambda_{\rb}=\infty).
\end{align}

The above equation implies that $\gba^{\dagger}$ is also an optimal solution to the \name{} objective in Eq.~\ref{eq:\name{}} for $\lambda_{\ra}=\lambda_{\rb}=\infty$. 
Thus, we arrive at a contradiction since $\gba^{\ast}$ is not the unique maximizer. Hence, proved.
\end{proof}

The above construction suggests that MLE-only training can fail to identify the optimal mapping corresponding to the joint distribution $p^\ast_{\ra, \rb}$ even if it lies within the mappings represented via the family represented via the \name{} architecture.
Failure modes due to non-identifiability could also potentially arise for adversarial and hybrid training.
Empirically, we find that while MLE-only training gives poor performance for cross-domain translations, the hybrid and adversarial training objectives are much more effective, which suggests that these objectives are less susceptible to identifiability issues in recovering the true mapping.


\section{Experiment Details}\label{app:expt}
We used PyTorch~\citep{paszke2017automatic} for implementing our codebase. 
All models were trained on a single Nvidia TitanX GPU.



\subsection{Image-to-Image Translation}
We use the standard training, validation, and test splits for each dataset.
For datasets which do not provide a validation set (\emph{e.g.,} Facades and CityScapes), we randomly hold out a portion of the training set with the same number of images as the test set. 
We train each model for 200 epochs with a fixed learning rate of $2\cdot 10^{-4}$ for the first 100 epochs, followed by a linear decay schedule for 100 epochs from the initial learning rate to 0. We use the Adam
optimizer with $
\beta_1 = 0.5$ and $\beta_2 = 0.999$, and for \name{} we apply weight normalization
of $5\cdot 10^{-5}$ to the generator's parameters. 
When training with an MLE objective, we apply gradient clipping with a maximum gradient norm of 10. 
Scaling flow models to higher dimensionality is an active area of research; for this work we resized the images to $64\times 64$ for Cityscapes and Maps, and $128\times 128$ for Facades. 
We use a batch-size of 16 images.

For MLE/Hybrid models, we used an isotropic Gaussian prior.
Variation in performance as a function of $\lambda$ for the Maps dataset is shown in Table~\ref{table:lambda-results}.
Similar trends hold for other datasets.
We use the following flow architecture to parameterize $G_{\rz\to \ra}$ and $G_{\rz\to \rb}$:
\begin{align*}
&\text{Scale[Input: 32x32x3, Output: 16x16x6x2]} \\
&\rightarrow \text{3x CheckerboardCoupling[Channels: 32, Blocks: 4]}\\
&\rightarrow \text{3x ChannelwiseCoupling[Channels: 64, Blocks: 4]}\\
&\rightarrow \text{Squeeze\&Split[Input: 32x32x3, Output: 16x16x6x2]}\\
&\text{Scale[Input: 16x16x6, Output: 8x8x12x2]} \\
&\rightarrow \text{3x CheckerboardCoupling[Channels: 64, Blocks: 4]}\\
&\rightarrow \text{3x ChannelwiseCoupling[Channels: 128, Blocks: 4]}\\
&\rightarrow \text{Squeeze\&Split[Input: 16x16x6, Output: 8x8x12x2]}\\
&\text{Scale[Input: 8x8x12, Output: 4x4x24x2]} \\
&\rightarrow \text{3x CheckerboardCoupling[Channels: 128, Blocks: 4]}\\
&\rightarrow \text{3x ChannelwiseCoupling[Channels: 256, Blocks: 4]}\\
&\rightarrow \text{Squeeze\&Split[Input: 8x8x12, Output: 4x4x24x2]}\\
&\text{Scale[Input: 4x4x24, Output: 4x4x24]} \\
&\rightarrow \text{4x CheckerboardCoupling[Channels: 256, Blocks: 4]}
\end{align*}
where CheckerboardCoupling and ChannelwiseCoupling are affine coupling layers with checkerboard and channelwise masking, respectively, and where Squeeze\&Split first trades spatial extent for channels by turning each $4\times 4\times 1$ subvolume into a $1\times 1\times 4$ subvolume, and then splits the volume along the last dimension and sends half of the features directly to the latent space. See \citet{real-nvp} for more details. Within each affine coupling layer, we parametrize the scale and translate factors using a ResNet
architecture with the specified number of channels and residual blocks. 
We additionally use activation normalization~\cite{glow} before each coupling layer.

\begin{table}[t]
    \centering
    \caption{Test mean squared error (MSE) for domain translation on Maps dataset for different choices of $\lambda=\lambda_\textrm{A}=\lambda_\textrm{B}$ for the \name{} objective. Note the low values of $\lambda$ are due to the different scales of the MLE and adversarial losses.}
    \vspace{1em}
    \label{table:lambda-results}
    \begin{tabular}{@{}lcc@{}}
    \toprule
 Model  & MSE ($\textrm{A} \to \textrm{B}$)     & MSE ($\textrm{B} \to \textrm{A}$)      
   \\ \midrule
    Adversarial only, $\lambda=0$          & 0.0385	        & 0.1123         
    \\
    Hybrid, $\lambda=1e-5$           & \textbf{0.0209} & \textbf{0.0897} 
    \\
    Hybrid, $\lambda=1e-4$               & 0.0211 & 0.1362 
    \\
    Hybrid, $\lambda=1e-3$               & 0.0260 & 0.1381
    \\
    Hybrid, $\lambda=1e-2$              & 0.0561 & 0.1457
    \\
    MLE only, $\lambda=\infty$        & 0.0452          & 0.1746          
    \\ 
    \bottomrule
    \end{tabular}
\end{table}

\subsection{Unsupervised Domain Adaptation}\label{app:domain_adapt}

One such model relevant to this experiment is Cycle-Consistent Domain Adaptation (CyCADA)~\citep{cycada}. CyCADA first learns a cross-domain translation mapping from source to target domain via CycleGAN.
This mapping is used to stylize the source dataset into the target domain, which is then subject to additional feature-level and semantic consistency losses for learning the target domain classifier~\citep{ganin2014unsupervised,adda}. 
We direct the reader to \citet{cycada} for further details.

We use the same training, validation and test splits of MNIST, USPS, and SVHN digit datasets as in CyCADA~\citep{cycada}. 
For all datasets, images are resized to $32\times 32$ as in CyCADA.
We employ the pixel-level and feature-level adaptation training pipeline as in CyCADA but replace the CycleGAN-based image translation network with the \name{}. 
The architectures for imposing semantic consistency and feature adaptation are the same as the ones used for CyCADA.
The architecture and hyperparameter tuning protocol was consistent with the one used for image-to-image translations using \name{}. 
For the hyperparameters of feature-level domain adaptation post the image translations, we adopted the optimal hyperparameter settings from ADDA~\citep{adda}.

\end{appendix}

\end{document}